
\documentclass[letterpaper, 10pt, conference]{ieeeconf}      

\IEEEoverridecommandlockouts                              
\overrideIEEEmargins

\usepackage{url}
\usepackage{graphics} 
\usepackage{epsfig} 
\usepackage{times} 
\usepackage{amsmath} 
\usepackage{amssymb}  

\usepackage{amsmath} 
\usepackage{bm} 
\usepackage{color} 

\usepackage{tikz}

\usepackage{enumitem}
\setlist{nolistsep} 
\usepackage{verbatim}
\usetikzlibrary{arrows,shapes,positioning,shadows,trees}

\tikzset{
	basic/.style  = {draw, text width=3cm, rectangle},
	root/.style   = {basic, rounded corners=2pt, thin, align=center,
		fill=green!0},
	level 2/.style = {basic, rounded corners=3pt, thin,align=center, fill=green!20,
		text width=9em},
	level 3/.style = {basic, ultra thin, align=center, fill=yellow!10,text width=8em}
}

\usepackage{amsmath}
\usepackage{amssymb}
\usepackage{pifont}
\usepackage{cite}
\usepackage{tikz}
\usepackage{tikz-cd}
\usetikzlibrary{shapes,arrows,fit,backgrounds,positioning}
\usetikzlibrary{decorations.pathmorphing} 
\usetikzlibrary{matrix} 
\usetikzlibrary{arrows} 
\usetikzlibrary{calc} 
\usepackage{verbatim}
\usepackage[table]{colortbl}
\usepackage{accents}
\usepackage{algorithm}
\usepackage{caption}
\usepackage[noend]{algpseudocode}

\DeclareMathOperator*{\argmin}{arg\,min}

\newtheorem{definition}{Definition}[section]
\newtheorem{remark}{Remark}[section]
\newtheorem{assumption}{Assumption}[section]
\newtheorem{theorem}{Theorem}[section]
\newtheorem{lemma}{Lemma}[section]
\newtheorem{property}{Property}[section]
\newtheorem{proposition}{Proposition}[section]
\newtheorem{corollary}{Corollary}[section]
\newtheorem{example}{Example}[section]

\usepackage{import}
\usepackage{xifthen}
\usepackage{pdfpages}
\usepackage{transparent}

\usepackage{setspace}

\usepackage{tablefootnote}

\makeatletter
\let\NAT@parse\undefined
\makeatother
\usepackage{hyperref}  

\newcommand\Set[1]{\mathbb{#1}} 

\newcommand\ko[0]{{\mathcal{K}}}

\newcommand{\tsgn}[1]{{#1}}

\newcommand\set[1]{\mathbb{#1}}


\usepackage{amsfonts}
  \usepackage{pgfplots}
  \pgfplotsset{compat=1.10}
  \usetikzlibrary{plotmarks}
  \usetikzlibrary{arrows.meta}
  \usepgfplotslibrary{patchplots}
  \usepackage{grffile}
  \usepackage{amsmath}
\usetikzlibrary{trees}
\usetikzlibrary{shadings}
\usetikzlibrary{backgrounds}

\title{\LARGE \bf 
Learning the Koopman Eigendecomposition: A Diffeomorphic Approach
}

\author{Petar Bevanda, Johannes Kirmayr, Stefan Sosnowski, Sandra Hirche
	\thanks{This work was supported by European Union's Horizon 2020 research and innovation programme under grant agreement no. 871295 "SeaClear" ({SEarch, identificAtion and Collection of marine Litter with Autonomous Robots}).}
	\thanks{All authors are members of the Chair of Information-oriented Control, Department of Electrical and Computer Engineering, Technical University of Munich, D-80333 Munich, Germany,
		{\tt\small \{petar.bevanda,  johannes.kirmayr, sosnowski, hirche\}@tum.de}.}%
}

\linepenalty=1000
\begin{document}

\maketitle
\thispagestyle{empty}
\pagestyle{empty}

\begin{abstract}
We present a novel data-driven approach for learning linear representations of a class of stable nonlinear systems using Koopman eigenfunctions. Utilizing the spectral equivalence of topologically conjugate systems, we construct Koopman eigenfunctions corresponding to the nonlinear system to form linear predictors of nonlinear systems. The conjugacy map between a nonlinear system and its Jacobian linearization is learned via a diffeomorphic neural network. The latter allows for a well-defined, supervised learning problem formulation. Given the learner is diffeomorphic per construction, our learned model is asymptotically stable regardless of the representation accuracy. 
The universality of the diffeomorphic learner leads to the universal approximation ability for Koopman eigenfunctions - admitting suitable expressivity.
The efficacy of our approach is demonstrated in simulations.
\end{abstract}
\vspace{0.5em}
\section{INTRODUCTION}\label{secI}
{For complex nonlinear systems, models based on first-principles often do not fully resemble the true system due to unmodeled phenomena. To better deal with the aforementioned, flexible machine learning techniques are employed (e.g. neural networks or Gaussian processes) for prediction \cite{Nelles2001,Kocijan2005}, model-based control \cite{Umlauft2018} and analysis \cite{Berkenkamp2016a,Lederer2019a}.
Although classical nonlinear system representations enjoy incredible success, multi-step prediction, analysis and optimization-based control are substantially more challenging than that of their linear analogues.
Inspired by the infinite-dimensional but \textit{linear} Koopman operator - named after B.O. Koopman's seminal work \cite{Koopman1931} - a rise of interest for global linearizations is observed in various research fields.
Trading infinite-dimensionality for linearity enables the use of efficient linear techniques for nonlinear systems - leading to more challenging identification but efficacious prediction, analysis and control \cite{Bevanda2021}.
The challenge of identification involves "lifting" the original system state to suitable higher-dimensional coordinates that represent the linear Koopman operator (generator) in a finite-dimensional form.}

A dominant train of thought assumes a predefined library of functions approximating the operator - akin to the well-known extended DMD (EDMD) \cite{Williams2015a}. However, a ``good" library of functions representing the Koopman operator (generator) should be both dynamically closed and relevant for reconstructing the original state evolution. Hence, \textit{apriori} access to a suitable function library is a strong assumption - commonly leading to only locally accurate models and no practicable learning guarantees. 
Other approaches leverage the expressive power of neural networks or kernel methods to learn a suitable library of functions \cite{Li2017a,Lian2019} but often lack theoretical justification.

However, the (generalized) eigenfunctions of the operator are dynamically closed coordinates by definition. Thus, to learn a Koopman operator (generator) representation for long-term accurate prediction, it is vital  to construct genuine eigenfunction-coordinates. 
Nevertheless, very few works consider direct learning of genuine Koopman eigenfunctions for linear prediction models, such as Korda et al. \cite{Korda2020b}. Although assuming no spectral or feature knowledge, its performance is dependent on trajectory data, a specific choice of eigenfunction lattices and basis functions for interpolation. Similarly to our own, the work of Folkestad et al. \cite{Folkestad2019} proposes to learn a conjugacy between the nonlinear dynamics and its Jacobian linearization to construct Koopman operator eigenfunctions for linear prediction. Crucially, however, the aforementioned method solves a markedly underdetermined learning problem while employing heuristics that are not theoretically justified.
Furthermore, although considering stable systems, it provides no stability guarantees that would allow for safety and physical consistency even for unseen states \cite{Tesfazgi2021}.

{This paper presents a novel data-driven approach for learning Koopman operator generator eigenfunctions for prediction.
To construct Koopman eigenfunctions, we learn a diffeomorphism between a hyperbolic nonlinear system and its linearization using a {diffeomorphic} neural network. The latter, together with explicit training targets, leads to a well-defined supervised learning problem.
The learner's universal approximation capability of diffeomorphisms transfers to that of the nonlinear system's Koopman eigenfunctions - allowing for sufficient expressivity.
Additionally, our framework also ensures {safety} in the sense of guaranteeing global asymptotic stability of the Koopman operator dynamical model - regardless of the representation accuracy.
The superior performance of our approach - also compared to existing techniques - is demonstrated in simulation examples.}

This paper is structured as follows: After the problem setup and introduction of required system-theoretical results in Sec. \ref{sec:KOthry} and \ref{sec:sysTHRY}, we propose a novel data-driven framework - \textit{KoopmanEigenFlows} - for constructing genuine Koopman operator eigenfunctions in Sec. \ref{sec:KEFlows}. Thereafter in Sec. \ref{sec:KEFDMD}, we propose a linear representation of the nonlinear systems with our algorithm \textbf{K}oopman\textbf{E}igen\textbf{F}low \textbf{M}ode \textbf{D}ecomposition (KEFMD). It is followed by numerical evaluation in Sec. \ref{sec:simres} and a conclusion.
\section{Problem Formulation}\label{sec:KOthry}
 {
 Consider a partially known, continuous-time nonlinear system \footnote{\textbf{Notation:}
		Lower/upper case bold symbols $\boldsymbol{x}$/$\boldsymbol{X}$ denote vectors/matrices.
		Symbols $\mathbb{N}/\mathbb{ R }/\mathbb{C}$ denote sets of natural/real/complex numbers while $\mathbb{N}_{0}$ denotes all natural numbers with zero and $\mathbb{R}_{+,0}/\mathbb{R}_{+}$ all positive reals with/without zero. 
		Function spaces with a specific integrability/smoothness order are denoted as $L^{}$/$C^{}$ with the order (class) specified in their exponent. The Jacobian matrix of map $\bm{h}$ evaluated at $\bm{x}$ is denoted as $\bm{J}_{\bm{h}}(\bm{x})$.

		A flow induced by a vector field $\dot{x}=f(x)$ is denoted as $F^{t}(x)$ with its associated family of composition (Koopman) operators $\{\ko^{t}_{f}\}_{t \in \mathbb{R}_{+,0}}$. The $L^p$-norm on a set $\set{X}$ is denoted as $\|\!\cdot\!\|_{p, \set{X}}$.}
	\begin{equation}\label{sysKdiffeo}
	\dot{\bm{x}}=\bm{f}(\bm{x})=\bm{A}\bm{x}+\bm{r}(\bm{x})
	\end{equation}
 with continuous states on a compact set $\bm{x} \in \set{X} \subset \mathbb{R}^{d}$ {containing the origin}, consisting of a known $\bm{A} \in \set{R}^{d \times d}$ and an unknown $\bm{r}: \set{X} \mapsto \set{R}^d$.
\begin{assumption}\label{ass:sysCLS}
We assume that the origin of \eqref{sysKdiffeo} is globally exponentially stable.
\end{assumption}
The above system class includes dynamical systems representing motion as well as various dissipative Lagrangian systems.
 }
Due to their continuous-time nature, the dynamics are fully described by the forward-complete flow map~\cite{Bittracher2015} of \eqref{sysKdiffeo} given by
\begin{equation}
\label{flow}
 \bm{x}(t_0) \equiv \bm{x}_0, \quad \bm{F}^{t}(\bm{x}_0):= \bm{x}_0+\int_{t_{0}}^{t_{0}+t} \bm{f}(\bm{x}(\tau)) d \tau,
\end{equation}
which has a unique solution on $[0,+\infty)$ from the initial
condition $\bm{x}$ at $t = 0$ due to stability of the isolated attractor~\cite{Angeli1999}. This flow map naturally induces the associated Koopman operator semigroup as defined in the following.
\begin{definition}\label{def:Koop}
	The semigroup of Koopman operators $\{{\mathcal{K}}^{t}\}_{t \in \mathbb{R}_{+,0}}\!:\! C(\Set{X}) \!\mapsto\! C(\Set{X})$ for the flow \eqref{flow} acts on a scalar observable function ${h} \!\in\! C(\Set{X})$ on the state space $\Set{X}$ through ${\mathcal{K}^{t}_{\bm{f}}} {{h}} ={{h}}\circ{\bm{F}^{t}}$.
\end{definition}
\begin{definition}[\!\cite{Lasota1994}]\label{def:generator}
	The operator ${\mathcal{G}_{\mathcal{K}}}$, is the infinitesimal generator
	\begin{equation}
	\mathcal{G}_{{\mathcal{K}}} {h}=\lim _{t \rightarrow 0^{+}} \frac{{\mathcal{K}}^{t} {h}-{h}}{t} = \frac{d}{dt}{h},
	\end{equation}
	of the time-$t$ indexed semigroup of Koopman operators $\{{\mathcal{K}}^{t}\}_{t \in \mathbb{R}_{+,0}}$.
\end{definition}
The natural linearly evolving coordinates are the eigenfunctions of evolution operators.
\begin{definition}\label{eigF}
	An observable $\phi \in  C(\Set{X})$ is an \textit{eigenfunction} if it satisfies
	\begin{equation}\label{KGEVE}
	[\mathcal{G}_{\mathcal{K}} \phi](\boldsymbol{x}) = \dot{\phi} \left(\boldsymbol{x}\right) = \lambda \phi (\boldsymbol{x}),
	\end{equation}
	associated with the \textit{eigenvalue} $\lambda \in \mathbb{C}$.
\end{definition}
\begin{property}\label{prop:linCoor}
	Since $\mathcal{G}_{\mathcal{K}}$ is the infinitesimal generator of the semigroup of Koopman operators $\{{\mathcal{K}}^{t}\}_{t \in \mathbb{R}_{+,0}}$, the following is also satisfied 
	\begin{equation}\label{eq:eigF}
	[\mathcal{K}^{t}_{\boldsymbol{f}} \phi](\boldsymbol{x}) = \phi\left( \boldsymbol{F}^{t} (\boldsymbol{x}) \right) = {e}^{\lambda t} \phi \left(\boldsymbol{x} \right),
	\end{equation}
	along the vector field's flow.
\end{property}
{Due to Assumption~\ref{ass:sysCLS}, 
the Koopman operator generator has a pure point spectrum for the dynamics \eqref{sysKdiffeo} \cite{Mauroy2016b}. 
Thus, for each observable $\bm{h}$, there exists a sequence $\bm{v}_j(h)\in\Set{C}$ of mode weights, such that action of the Koopman generator is represented through the following decomposition 
\begin{equation}\label{evoOF}
\dot{\bm{h}} =\mathcal{G}_{\ko} \bm{h} =\sum_{j=1}^{\infty} \bm{v}_{j}(\bm{h})\left(\mathcal{G}_{\ko} {\phi}_{j}\right)=\sum_{j=1}^{\infty} \bm{v}_{j}(\bm{h}) \lambda_{j} {\phi}_{j}.
\end{equation}}
Given the existence of the decomposition (\ref{evoOF}), we are interested in learning a model of the following form
\begin{subequations}\label{eq:LTI}
\begin{align}
\bm{z}_0 &=\bm{\phi}(\bm{x}(0)), \label{eq:cLTI:1}\\
\dot{\bm{z}} &=\bm{\Lambda} \bm{z}, \label{eq:cLTI:2}\\
\bm{x} &=\bm{V} \bm{z}, \label{eq:cLTI:3}
\end{align}
\end{subequations}
where $\bm{\phi}=[\phi_1,\cdots,\phi_D]^{\top}$ are the finite-dimensional eigenfunction coordinates, $\bm{\Lambda} \in \set{R}^{D \times D}$ and $\bm{V} \in \set{R}^{d \times D}$.
We consider the full-state observable to be the output of interest $\bm{h}(\bm{x})=\operatorname{id}(\bm{x})$ in (\ref{eq:cLTI:3}). The goal is to trade the nonlinearity of a $d$-dimensional ODE (\ref{sysKdiffeo}) for a nonlinear ``lift" (\ref{eq:cLTI:1}) of the initial condition $\bm{x}(0)$ to higher dimension ($D \gg d$) leading to a linearly evolving model $\dot{\bm{x}}=\bm{V}\bm{\Lambda}\bm{\phi}(\bm{x})$ with a closed form flow ${\bm{x}(t)}=\bm{V}e^{\bm{\Lambda}t}\bm{\phi}(\bm{x}(0))$. In general, the output of interest (\ref{eq:cLTI:3}) can be any other observable function $\bm{h}(\bm{x})$ as well.

\vspace{-0.2em}
\section{Modeling via Equivalence Relations}\label{sec:sysTHRY}
To reliably construct Koopman eigenfunctions, we utilize equivalence relations between the nonlinear system \eqref{sysKdiffeo} and its linearization around the origin. To utilize the aforementioned to build models of the form \eqref{eq:LTI}, we introduce some relevant Koopman eigenfunction properties.
\begin{property}[\!\cite{Budisic2012}]\label{prop:productEFs}
	If the function space of eigenfunction is chosen to be a Banach algebra (e.g. $C^1(\set{X})$), the set of eigenfunctions forms an Abelian semigroup under point-wise products of functions. Thus, for $\mathcal{G}_{\ko_{\boldsymbol{f}}}$ with eigenvalues ${\lambda_{1}}$ and ${\lambda_{2}}$, $\phi_{1} \phi_{2}$ is also an eigenfunction of $\mathcal{G}_{\ko_{\boldsymbol{f}}}$ with eigenvalue ${\lambda_{1}+\lambda_{2}}$. 
\end{property}
{
{\begin{definition}[\!\cite{Mohr2014a,Folkestad2019}]\label{def:princEP}
Consider a system $\dot{\bm{y}}=\bm{A}\bm{y}$ and a multi-index $\bm{m}=[m_1,...,m_d]\in\mathbb{N}_0^d$ such that  $\|\bm{m}\|_1\tsgn{=}m_{1}\tsgn{+}\cdots\tsgn{+}m_{d}\leq {p}$ for $p \in \set{N}$.
	Consider $\{E^p\}$ to be the eigenpair group - a collection of all eigenvalue-eigenfunction pairs - of the Koopman operator generator $\mathcal{G}_{\ko_{\boldsymbol{A}}}$ for $\dot{\bm{y}}=\bm{A}\bm{y}$ with its minimal group generator $\mathcal{P}_{E}$:
	\begin{equation}\label{PeP}
	\{E^p\}=\left\{\left(\sum_{i=1}^{p} {{m_i\lambda_{i}}}, \prod_{i=1}^{p} \varphi_{i}^{m_i}\right) \mid\left(\lambda_{i}, \varphi_{i}\right) \subset \mathcal{P}_{E} \right\}.
	\end{equation}
 Then, the elements of $\mathcal{P}_{E}$ are \emph{principle} eigenvalues-eigenfunction pairs $\left(\lambda_{i}, \varphi_{i}\right)$ of $\mathcal{G}_{\ko_{\boldsymbol{A}}}$.
\end{definition}}
Less formally, \emph{principle eigenpairs} form the minimal set used to construct arbitrarily many other eigenpairs (\ref{PeP}).
}
\vspace{-0.5em}
\subsection{Topological Proxy to Koopman Eigenfunctions}
Here, we define the notions relevant for the geometric equivalence relations considered in this work.

\begin{definition}
	Consider a bijective map $\bm{g}: \mathbb{R}^{n} \mapsto \mathbb{R}^{n}$. The bijective map $\bm{g}$ is a \textit{homeomorphism} if both the map and its inverse $\bm{g}^{-1}$ are continuous. If the maps $\bm{g}$ and $\bm{g}^{-1}$ are also continuously differentiable, then $\bm{g}$ is a \textit{diffeomorphism}.
\end{definition}
\begin{definition}\label{def:conj}
	Two flows $\bm{F}^t: \set{X} \mapsto \set{X}$ and $\bm{C}^t: \set{Y} \mapsto \set{Y}$ of vector fields $\dot{\bm{x}}=\bm{f}(\bm{x})$ and $\dot{\bm{y}}=\bm{c}(\bm{y})$ are {topologically conjugate} if there exists a homeomorphism ${\bm{g}}: \set{X} \mapsto \set{Y}$ such that $\bm{g} \circ	\bm{F}^t = \bm{C}^t \circ \bm{g} $ holds  $\forall \bm{x} \in \set{X}$ and $t \in \set{R}$.
\end{definition}

\begin{proposition}[\!\cite{Budisic2012}]\label{specEQ}
	Consider the same two flows from Definition  \ref{def:conj}. If $(e^{\lambda t}, \varphi)$ is an eigenpair of $\mathcal{K}^t_{\bm{c}}$, then $(e^{\lambda t}, \varphi \circ \bm{g})$ is an eigenpair of $\mathcal{K}^t_{\bm{f}}$.
\end{proposition}

Extending the topological conjugacy to the entire region of attraction is formalized in the following.
\begin{proposition}[\!\cite{Folkestad2019}]\label{prop:l2nlEFs}
	Assume that the nonlinear system (\ref{sysKdiffeo}) is topologically conjugate to its Jacobian linearization via the diffeomorphism $\bm{d}: \set{X} \mapsto \set{Y}$. Let $\set{B} \subset \set{X}$ be a simply connected, bounded, positively invariant open set in $\set{X}$ such that $\bm{d}(\set{B}) \subset \set{Q}_{r} \subset \set{Y}$, where $\set{Q}_{r}$ is a cube in $\set{Y}$. Scaling $\set{Q}_{r}$ to the unit cube $\set{Q}_{1}$ via the diffeomorphism  $\bm{g}: \set{Q}_{r} \mapsto \set{Q}_{1}$ gives $(\bm{g} \circ \bm{d})(\set{B}) \subset \set{Q}_{1} .$ Then, if $\varphi$ is an eigenfunction for $\ko^t_{\bm{A}}$ at $e^{\lambda t}$, then $\varphi \circ \bm{g} \circ \bm{d}$ is an eigenfunction for $\ko^t_{\bm{f}}$ at eigenvalue $e^{\lambda t}$, where $\ko^{t}_{\bm{f}}$ is the Koopman operator semigroup associated with the nonlinear dynamics (\ref{sysKdiffeo}).
\end{proposition}
\begin{theorem}[\!\cite{Lan2013}]\label{thm:atMapLin}
	Consider the system (\ref{sysKdiffeo}) with $\bm{r}(\bm{x}) \in C^{2}(\set{X})$. Under Assumption \ref{ass:sysCLS}, the matrix $\bm{A}$ in \eqref{sysKdiffeo} is Hurwitz i.e., all eigenvalues have negative real parts. Then in the region of attraction $\set{S}$ of the origin there exists $\bm{\varrho}(\bm{x}) \in C^{1}(\set{S}): \set{S} \mapsto \set{R}^{d},$ such that $\bm{y}={\bm{d}}(\bm{x})=\bm{x}+\bm{\varrho}(\bm{x})$ is a $C^{1}$ diffeomorphism with $\bm{\varrho}(\bm{0})=\bm{0}$ in $\set{S}$ and satisfying $\dot{\bm{y}}=\bm{A} \bm{y}$.
\end{theorem}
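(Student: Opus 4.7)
The plan is a classical two-step construction: first build a local $C^1$ linearizing diffeomorphism on a neighborhood of the origin, then extend it globally to all of $\set{S}$ by ``pulling back'' through the flow of \eqref{sysKdiffeo}.

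For the local step, I would invoke Hartman's $C^1$ linearization theorem for hyperbolic sinks. Since $\bm{A}$ is Hurwitz and $\bm{r} \in C^2$, there exists a neighborhood $U$ of the origin and a $C^1$ diffeomorphism $\bm{d}_{\mathrm{loc}} \colon U \to V \subset \set{R}^d$ with $\bm{d}_{\mathrm{loc}}(\bm{0}) = \bm{0}$ and, after normalizing the linear part, $\bm{J}_{\bm{d}_{\mathrm{loc}}}(\bm{0}) = \bm{I}$, that conjugates the nonlinear flow $\bm{F}^t$ to the linear flow $e^{\bm{A}t}$ on $U$, namely $\bm{d}_{\mathrm{loc}} \circ \bm{F}^t = e^{\bm{A}t} \circ \bm{d}_{\mathrm{loc}}$ whenever both sides are defined. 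The condition $\bm{J}_{\bm{d}_{\mathrm{loc}}}(\bm{0}) = \bm{I}$ ensures that writing $\bm{d}_{\mathrm{loc}}(\bm{x}) = \bm{x} + \bm{\varrho}_{\mathrm{loc}}(\bm{x})$ gives $\bm{\varrho}_{\mathrm{loc}}(\bm{0}) = \bm{0}$ and $\bm{\varrho}_{\mathrm{loc}}$ of class $C^1$ on $U$.

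For the global step, let $\bm{x} \in \set{S}$ be arbitrary. Since the origin is exponentially stable on $\set{S}$, there is a time $T = T(\bm{x}) \geq 0$ such that $\bm{F}^t(\bm{x}) \in U$ for all $t \geq T$, and I define $\bm{d}(\bm{x}) := e^{-\bm{A}T}\,\bm{d}_{\mathrm{loc}}\!\left(\bm{F}^T(\bm{x})\right)$. Using the local conjugacy and the semigroup property of $\bm{F}^t$, one verifies this value is independent of the admissible $T$: any $T' \geq T$ gives $\bm{d}_{\mathrm{loc}}(\bm{F}^{T'}(\bm{x})) = e^{\bm{A}(T'-T)}\bm{d}_{\mathrm{loc}}(\bm{F}^T(\bm{x}))$, which cancels the extra $e^{-\bm{A}(T'-T)}$. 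Smoothness of $\bm{F}^t$ in $\bm{x}$ (guaranteed since $\bm{f} \in C^2$) and the $C^1$ regularity of $\bm{d}_{\mathrm{loc}}$ make $\bm{d}$ of class $C^1$ on every compact subset of $\set{S}$ (by choosing $T$ uniform on compacta), hence $C^1$ on all of $\set{S}$. A symmetric construction $\bm{d}^{-1}(\bm{y}) := \bm{F}^{-T} \circ \bm{d}_{\mathrm{loc}}^{-1}\!\left(e^{\bm{A}T}\bm{y}\right)$ yields the inverse, establishing the diffeomorphism property. The global conjugacy $\bm{d} \circ \bm{F}^t = e^{\bm{A}t} \circ \bm{d}$ on $\set{S}$ then follows from the definition, whose infinitesimal form $\dot{\bm{y}} = \bm{J}_{\bm{d}}(\bm{x})\bm{f}(\bm{x}) = \bm{A}\,\bm{d}(\bm{x}) = \bm{A}\bm{y}$ is the claimed linearization. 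Setting $\bm{\varrho}(\bm{x}) := \bm{d}(\bm{x}) - \bm{x}$ closes the argument: $\bm{\varrho}$ inherits $C^1$ regularity and $\bm{\varrho}(\bm{0}) = \bm{0}$.

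The main obstacle is the local $C^1$ linearization step itself: the familiar Hartman--Grobman theorem only yields a $C^0$ conjugacy, and lifting it to $C^1$ for a hyperbolic sink requires Hartman's sharper linearization theorem (or, equivalently, a Sternberg-type non-resonance argument), whose proof proceeds by constructing the conjugacy as the fixed point of a contraction operator on a suitably weighted function space with delicate decay estimates tied to the spectral gap of $\bm{A}$. Once that local result is in hand, the remainder---well-definedness across different $T$, propagation of $C^1$ regularity, and bijectivity on $\set{S}$---is essentially bookkeeping via the flow.
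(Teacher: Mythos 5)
The paper offers no proof of Theorem~\ref{thm:atMapLin}; it is imported verbatim from \cite{Lan2013}. Your outline --- local $C^1$ linearization of the hyperbolic sink via Hartman's theorem (correctly noting that plain Hartman--Grobman only yields a $C^0$ conjugacy), followed by globalization over the region of attraction $\set{S}$ through the flow pullback $\bm{d}(\bm{x})=e^{-\bm{A}T}\bm{d}_{\mathrm{loc}}\left(\bm{F}^{T}(\bm{x})\right)$ with the well-definedness, regularity and invertibility checks you describe --- is sound and is essentially the argument of that cited reference.
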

By utilizing topological conjugacy, one exploits the fact that the eigenvalues are shared between the full nonlinear system (\ref{sysKdiffeo}) and its linearization around an asymptotically stable fixed point.
Subsequently, one is able to construct arbitrarily many eigenfunctions from the principal ones via Definition \ref{def:princEP} and Theorem \ref{specEQ}. Using these properties allows for learning Koopman-based dynamical models by lifting to eigenfunction coordinates {by design}.

\vspace{-0.5em}
\subsection{Asymptotic Stability Guarantees}
By Theorem \ref{thm:atMapLin} we deal with diffeomorphic mappings, motivating the definition of a stronger equivalence notion.
\begin{definition}[\!\cite{Meiss2007}] \label{def:smooth_equivalence}
	Vector fields $\dot{\bm{x}}\tsgn{=}\bm{f}(\bm{x})$ and $\dot{\bm{y}}\tsgn{=}\bm{t}(\bm{y})$ are said to be diffeomorphic, or smoothly equivalent, if there exists a diffeomorphism $\bm{d}: \mathbb{R}^{d} \mapsto \mathbb{R}^{d}$ such that
	$\forall \bm{x} \in \mathbb{R}^{d}$ $\bm{t}(\bm{d}(\bm{x}))\tsgn{=}\bm{J}_{\bm{d}}(\bm{x}) \bm{f}(\bm{x})$ holds.
\end{definition}
To ensure safety, we are interested in transferring the asymptotic stability properties of the linearization to the lifted linear system \eqref{eq:LTI}. That is enabled by the following result.
{{
\begin{theorem}\label{thm:diffNL2LTIgas}
		Consider a system \eqref{sysKdiffeo} satisfying Assumption \ref{ass:sysCLS}, its Jacobian linearization $\dot{\bm{y}}=\bm{A}\bm{y}$ and a diffeomorphic map $\bm{d}$. Let the Koopman eigenfunctions \eqref{eq:cLTI:1} corresponding to $\mathcal{G}_{\ko_{\bm{f}}}$ of system \eqref{sysKdiffeo} be constructed via Proposition \ref{prop:l2nlEFs} utilizing Definition \ref{def:princEP}. Then, the associated $\mathcal{G}_{\ko_{\bm{f}}}$-realization of the form \eqref{eq:LTI} is guaranteed to be asymptotically stable.
		
	\begin{proof}\label{thmproof}
{First we show the transition matrix in \eqref{eq:cLTI:2} is Hurwitz. As $\bm{A}$ is Hurwitz by Assumption \ref{ass:sysCLS}, then $\bm{\Lambda}$ in \eqref{eq:cLTI:2} is as well with eigenvalues satisfying $\operatorname{Re}[\sum^{p}_{i=1}m_i{\lambda}_{i}] < 0$ per Definition \ref{def:princEP}.
Secondly, we show the lifting \eqref{eq:cLTI:1} is an immersion - $\operatorname{rank}(\bm{J}_{\bm{\phi}}(\bm{x}))\tsgn{=}\operatorname{dim}(\bm{x})$. Consider an eigenfunction library $\bm{\varphi}(\bm{y})$ constructed by concatenating elements of $\{E^p\}$ of Definition \ref{def:princEP}. As it forms a monomial basis, its rank equals $\operatorname{dim}(\bm{y})$ making it an immersion.
Diffeomorphisms $\bm{g}$ and $\bm{d}$ prescribed by Proposition \ref{prop:l2nlEFs} are immersions by definition. As compositions of immersions are an immersion, the immersibility of $\bm{\phi}\tsgn{=}\bm{\varphi}\tsgn{\circ}\bm{g}\tsgn{\circ}\bm{d}$ is ensured. With $\bm{\Lambda}$ Hurwitz and $\bm{\phi}$ immersible, the asymptotic stability of the lifted model \eqref{eq:LTI} follows via \cite[Proposition 1]{Yi2021}.}
	\end{proof}	
\end{theorem}
\begin{remark}
The result of Theorem \ref{thm:diffNL2LTIgas} establishes that the asymptotic stability of the Jacobian linearization carries over to the linear predictor of the form \eqref{eq:LTI} when constructed by diffeomorphically transforming the eigenfunctions of a linear system.
\end{remark}
}
}
\vspace{-0.0em}
\section{Learning Equivalences via Invertible Neural Networks}\label{sec:KEFlows}
In order to learn the Koopman eigenfunctions through an equivalence relation for the system (\ref{sysKdiffeo}) in a well-conditioned manner, one needs to ensure the function approximator is constrained to be a diffeomorphism.
Allowing for that are flow-based neural networks, where coupling flow invertible neural networks (CF-INN) present a powerful tool.
Although with their form restricted compared to vanilla neural networks, there are CF-INN architectures exhibiting $L^{p}$-universality/$\operatorname{sup}$-universality for a large class of diffeomorphisms \cite{Teshima2020}. Hence, they can be relied on for learning of Koopman eigenfunctions via Proposition \ref{prop:l2nlEFs}.. 
\begin{assumption}\label{ass:univDiffeo}
	Let $D^2(\set{X})$ be the set of all $C^2(\set{X})$ diffeomorphisms. Then, there is a class $\mathcal{D}$ of universal approximators $\bm{{\hat{d}}}$ such that for any ${\bm{d}} \in{D}^2(\set{X})$ and any $\varepsilon > 0, \exists \bm{{\hat{d}}} \in \mathcal{D}$ such that $\| \bm{d}-\bm{{\hat{d}}} \|_{p, \set{X}}<\varepsilon$.
\end{assumption}
The above assumption is hardly restrictive as it is fulfilled by almost all diffeomorphisms and thus systems \eqref{sysKdiffeo}. Therefore, we can state the following result on the expressivity of learning Koopman eigenfunctions constructed via Proposition \ref{prop:l2nlEFs}.
\begin{lemma}\label{lem:uniEFs}
  	Let there exist a $C^2$-diffeomorphism making the system \eqref{sysKdiffeo} smoothly equivalent to its Jacobian linearization. Consider a function approximator $\bm{{\hat{d}}}$ fulfilling Assumption \ref{ass:univDiffeo} and let $\mathcal{G}_{\ko_{\bm{f}}}$-eigenfunctions of \eqref{sysKdiffeo} be constructed via Proposition \ref{prop:l2nlEFs} such that an approximate eigenfunction has the form $\hat{\phi} = \varphi \circ \bm{g} \circ \bm{{\hat{d}}}$. Then, for any $\delta > 0$ and any ${\phi}$,  $\exists \hat{\phi}$ such that $\|{\phi} - \hat{\phi}\|_{p, \set{X}} < \delta$.
  	
 \begin{proof}
    Due to continuity on their corresponding domains, there exist Lipschitz constants $L_{\varphi}$ and $L_{\bm{g}}$, associated to $\varphi$ and $\bm{g}$ respectively so that $\|\bm{d}-\bm{{\hat{d}}} \|_{p, \set{X}}<\frac{\delta}{L_{\varphi}L_{\bm{g}}}$ holds. Then, we have
 	\begin{equation}
 	\begin{aligned}
 	\|{\phi} - \hat{\phi}\|_{p, \set{X}} & \stackrel{\text{Prop. \ref{prop:l2nlEFs}}}{=} \|\varphi \circ \bm{g} \circ {\bm{d}} - \varphi \circ \bm{g} \circ \bm{{\hat{d}}}\|_{p, \set{X}} \\
 	& \quad \leq \quad L_{\varphi}L_{\bm{g}} \|{\bm{d}} - \bm{{\hat{d}}}\|_{p, \set{X}} < \delta,
 	\end{aligned}
 	\end{equation}
	concluding the proof.
 \end{proof}
\end{lemma}

\vspace{-0.5em}
\subsection{Affine Coupling Flows (ACF)}
\label{sec:affine_coupling_flows}
The approach of NF is to compose a complicated bijective function successively from multiple simpler bijections - using the fact that the composition of bijective functions is again bijective. In our case, we form a diffeomorphism between the nonlinear system and its linearization by multiple simpler diffeomorphisms $\bm{{\hat{d}}_i}$ so that $\bm{y}=\bm{{\hat{d}}}(\bm{x}) = \bm{{\hat{d}}_k} \circ ... \circ \bm{{\hat{d}}_1}(\bm{x})$ - as visualized in Fig. \ref{fig:ACF}.
\begin{figure}[htb]
    \centering
    \includegraphics[width=1.\linewidth]{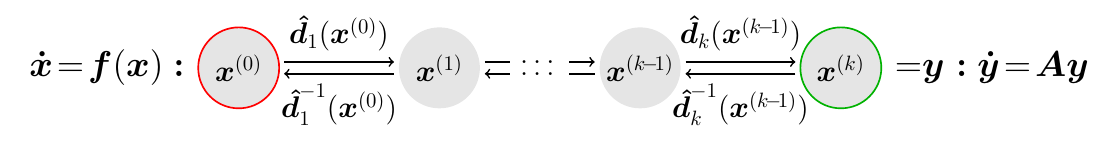}
    \caption{Construction of a linearizing diffeomorphism}
    \label{fig:ACF}
\end{figure}
The bijectivity of the individual functions $\bm{{\hat{d}}_i}$ is ensured by a special structure - called affine coupling layers
\begin{equation}
\resizebox{.8\linewidth}{!}{
$\bm{{\hat{d}}}_{i}(\bm{x}^{(i)}) = \begin{bmatrix}\bm{x_a}^{(i)}\\\bm{x_b}^{(i)} \odot \operatorname{exp}(\bm{s_i}(\bm{x_a}^{(i)})) + \bm{t_i}(\bm{x_a}^{(i)})\end{bmatrix}$,
}
\label{eq:coupling_layer} 
\end{equation}

where $\bm{x_a} \equiv x_1,...,x_n$, $\bm{x_b} \equiv x_{n+1},...,x_N$, $\odot$ denotes Hadamard product and $\operatorname{exp}$ denotes pointwise exponential. The input vector $\bm{x}^{(i)}$ is split dimension-wise into two parts $\bm{x}_{a}^{(i)}$ and $\bm{x}_{b}^{(i)}$; $\bm{x}_{a}^{(i)}$ is then scaled with $\operatorname{exp}(\bm{s}(\bm{x}))$, and translated with $\bm{t}(\bm{x})$, and multiplied/added element-wise to $\bm{x}_{b}^{(i)}$. It is important to change the policy of splitting for each affine coupling layer to leave any component unaltered. The scaling functions $\bm{s}_i:\mathbb{R}^n \mapsto \mathbb{R}^{N-n}$ and translation functions $\bm{t}_i:\mathbb{R}^n \mapsto \mathbb{R}^{N-n}$ are learned and parametrized by multi-layer neural networks with suitably smooth activation functions to form a diffeomorphism. The parameters that have to be trained in order to learn a diffeomorphisms are therefore the weights and biases in the neural networks of the scaling and translation functions - concatenated in parameters $\bm{w}=[\bm{w}^{\top}_{\bm{s}_1},\bm{w}^{\top}_{\bm{t}_1}, \cdots, \bm{w}^{\top}_{\bm{s}_k},\bm{w}^{\top}_{\bm{t}_k}]^{\top}$. 
\begin{remark}
As the special structure of the affine coupling layer already ensures bijectivity and invertibility by design, there are no restrictions to the neural networks of the scaling and translation functions and they can be learned freely. 
\end{remark}

{\subsection{Supervised Learning of a Linearizing Diffeomorphism}}\label{subs:ACFdesign}
\label{sec:cost_function}
As a corollary of Theorem~\ref{thm:atMapLin} and Definition~\ref{def:smooth_equivalence} the following equations
\begin{equation}\label{eq:diffeo_learning_topological_conjugacy} 
\bm{{\dot{x}}} {=} \bm{J_{{{d}}}}^{-1}(\bm{x})\bm{A} \bm{{{d}}}(\bm{x}),~\bm{J_{d}}(\bm{0}) {=}\bm{I},~\bm{d}(\bm{0}){=}\bm{0},  \end{equation}
are to be fulfilled by a linearizing diffeomorphism.
Assume the availability of a data-set of $N$ input-output pairs $\mathbb{D}_{N}=\left\{ \dot{\bm{x}}^{(i)},\bm{x}^{(i)}\right\}_{i=1}^{N}$ for the system \eqref{sysKdiffeo} satisfying Assumption \ref{ass:sysCLS}. Then, the solution $\bm{{\hat{d}}}(\bm{x}):=\bm{{{d}}_{\bm{{\hat{w}}}}}(\bm{x})$ to (\ref{eq:diffeo_learning_topological_conjugacy}) can be obtained in terms of the ACF parameters $\bm{w}$ by solving the following optimization problem
\begin{subequations}\label{eq:loss_function} 
\begin{align}
\bm{{\hat{w}}} = \argmin_{\bm{w}} \sum_{i=1}^{N} &\|\bm{{\dot{x}}} \tsgn{-}\bm{J_{{{\bm{{d_{w}}}}}}}^{-1}(\bm{x})\bm{A} \bm{{{\bm{{d_{{w}}}}}}}(\bm{x})\|_2^2\tsgn{+}\\&\|\bm{J_{{\bm{{d_{{}}}}}}}(\bm{0})\tsgn{-}\bm{I}\|_2^2\tsgn{+}\|\bm{d_{}}(\bm{0})\tsgn{-}\bm{0}\|_2^2.
\end{align}
\end{subequations}
\subsection{Constructing Nonlinear System's Eigenfunctions}
\label{sec:construct_ef}
Algorithm~\ref{alg:KEFs} provides a pseudo code for the construction of Koopman eigenfunctions. Since the eigenfunctions are constructed via the learned diffeomorphism through Normalizing Flows, we call this approach \textit{KoopmanEigenFlows}. 
We first calculate the eigenfunctions of the linearized system. Let $\bm{v}=\{\bm{v_1},...,\bm{v_d}\}$ be the eigenvector-basis of matrix $\bm{A}$ corresponding to non-zero eigenvalues $\{\lambda_1,...,\lambda_d\}$. Then the adjoint basis $\bm{w}=\{\bm{w_1},...,\bm{w_d}\}$ is given via the transposed cofactor-matrix of $\bm{v}$ so that $\bm{\langle v_i,w_j\rangle} \tsgn{=} \delta_{ij}$ and $\bm{w_k}$ is an eigenvector of $\bm{A^*}$ at eigenvalue $\overline{\lambda}_k.$
Then, the inner product $\varphi_{\text{p},i} = \bm{\langle y,w_j \rangle}$ is a nonzero principal eigenfunction of the Koopman generator $\mathcal{G}_{\mathcal{K}_{\bm{A}}}$ of the linearized system cf. \cite[Prop. 1]{Folkestad2019}. As a corollary of~\eqref{evoOF} for observable $\bm{h}=\operatorname{id}$, we know that the modal decomposition of $\bm{x}$ is given through
\begin{equation}
\bm{{{x}}} = \sum_{j=1}^{\infty} \bm{v}_j{\phi}_j(\bm{x}) \ , \label{eq:reconstruction_of_x_infinite_sum} 
\end{equation} 
where the infinite sum results from the infinite dimensionality of the Koopman operator. 
For a practicable representation of \eqref{eq:reconstruction_of_x_infinite_sum}, we create a library of eigenfunctions taking the principle ones to predefined maximum powers $p^{(1)},...,p^{(d)}$ for each principal eigenfunction $\varphi_{\text{p},i}$ using Property \ref{prop:productEFs}.
With this finite number of eigenfunctions, \eqref{eq:reconstruction_of_x_infinite_sum} can be written in matrix-vector notation, i.e. $\bm{{\hat{x}}}=\bm{V}{\bm{{\hat{\phi}}}}(\bm{x})=\bm{V}\bm{z}$, with the reconstruction matrix $\bm{V}$. The simple library of eigenfunctions of the linearized system $\bm{{{\varphi}}}$ is then converted to a library of eigenfunctions of the nonlinear system via $\bm{{\hat{\phi}}}(\bm{x})=\bm{\varphi} \circ \bm{g} \circ \bm{{\hat{d}}}(\bm{x})$. The eigenvalues are preserved, since they are shared between topologically conjugate systems.

\begin{algorithm}
\caption{\small KoopmanEigenFlows}\label{alg:KEFs}
\begin{algorithmic}[1]
    \Statex {\small \textbf{Input:} Jacobian linearization $\bm{A}$\textbf{;} $\mathbb{D}_{N}\tsgn{=}\{ {\bm{{\dot{x}}}}^{(i)},\bm{x}^{(i)}\}^{N}_{i=1}$\textbf{;}  maximum powers $p^{(1)},...,p^{(d)}$ of the $d$ principal eigenpairs}
    \State {\small Learn a diffeomorphism $\bm{{\hat{d}}}$ through NF:}
    \Statex {\small $\bm{{\hat{d}}}(\bm{x}) \gets NF (\textbf{id}(\bm{x}))$}
    \Statex {\small $\bm{{\hat{d}}}(\bm{x}) \gets \eqref{eq:loss_function}$}
    \State {\small Construct principal eigenpairs of the linearized system:}
    \Statex {\small $\lambda_{\text{p},j}, \varphi_{\text{p},j} \gets \bm{A}, \langle y,w_j \rangle$,~~~ $j=1,...,d$}
    {
    \State {\small Construct library of eigenpairs of the linearized system:}
    \Statex {\small \textbf{for} $M=$ all combinations of $[p^{(1)},...,p^{(d)}]$ \textbf{do}}
    \Statex {\small $~~~\lambda_i \gets \sum_{j=1}^d  {\bm{M}[i,j] \cdot \lambda_{\text{p},j}}$,~~~$i=1,...,\prod_{k=1}^{d}p^{(k)}+1$}
    \Statex {\small $~~~\varphi_{i} \gets \prod_{j=1}^d  \varphi_{\text{p},j}^{\bm{M}[i,j]}$}
    }
    \State {\small Construct eigenpairs of the nonlinear system:}
    \Statex {\small ${\displaystyle \bm{{\hat{\phi}}} \gets \bm{\varphi} \left(\bm{g}(\bm{{\hat{d}}}(\bm{x}))\right), ~~~~~~~~\bm{\varphi}=[\varphi_{1},...,\varphi_{i},...,\varphi_{D}]^T}$}
    \Statex {\small \textbf{Output:} A library of eigenvalue-eigenfunction pairs $(\bm{\lambda}, \bm{{\hat{\phi}}})$}
\end{algorithmic}
\end{algorithm}
\begin{corollary}\label{cor:GASanyway}
Let diffeomorphisms $\bm{d}=\bm{{\hat{d}}_k} \circ \ldots \circ \,\bm{{\hat{d}}_1}(\bm{x})$ be parameterized through coupling layers  \eqref{eq:coupling_layer}, which are defined using $C^1$-functions $\bm{s}_i$, $\bm{t}_i$.
Then, every solution of the optimization problem \eqref{eq:loss_function} constructed via Proposition \ref{prop:l2nlEFs} utilizing Definition \ref{def:princEP} yields a stable system \eqref{eq:LTI}.\looseness=-1
\end{corollary}
\begin{proof}
As ACFs form diffeomorphisms per construction with $C^1$-function approximators $\bm{s}_i$ and $\bm{t}_i$, the asymptotic stability asserted by Theorem \ref{thm:diffNL2LTIgas} holds regardless of the approximation accuracy.
\end{proof}
\begin{remark}
In essence, the result of Cor. \ref{cor:GASanyway} decouples safety from performance in Alg. \ref{alg:KEFs} as asymptotic stability is guaranteed regardless of how well \eqref{eq:loss_function} fits \eqref{eq:diffeo_learning_topological_conjugacy}.
\end{remark}
\section{{KoopmanEigenFlow} Mode Decomposition (KEFMD)}\label{sec:KEFDMD}
To construct linear predictors using Koopman eigenfunctions for nonlinear dynamics, we develop a method to build a linear model in the space of Koopman eigenfunction-observables. As \textit{{KoopmanEigenFlows}} are utilized to construct the Koopman eigenfunction coordinates, we name our algorithm \textbf{K}oopman\textbf{E}igen\textbf{F}low \textbf{M}ode \textbf{D}ecomposition (KEFMD). 
Since we inherit the spectrum from the linearization we do not append ``Dynamic Mode Decomposition". Furthermore, mode decomposition is a general concept resulting from operator theoretic identification - transcending the original DMD  algorithm \cite{schmid_2010} that only considers observables that are linear functions of the state.
\vspace{-0.3em}
\subsection{The KEFMD Framework}
{The developments of Sec. \ref{sec:sysTHRY} and \ref{sec:KEFlows} allow for a system identification approach that extends the spectral properties of a Jacobian linearization of the system around an equilibrium to the corresponding nonlinear system.}
With the output of Algorithm~\ref{alg:KEFs}, an LTI-system in eigenfunction coordinates - as defined in \eqref{eq:LTI} - is constructed. Once the initial condition is lifted \eqref{eq:cLTI:1}, the evolution \eqref{eq:cLTI:2} as well as the state reconstruction \eqref{eq:cLTI:3} is linear - permitted through Assumption \ref{ass:sysCLS}. The corresponding flow of the original state is given explicitly as $\bm{{\hat{x}}}(t) = \bm{V}e^{\bm{\Lambda}t}\bm{z}_0 \  \label{eq:flow_x(t)}$. We can define a discretized matrix $\bm{\Lambda}_d=e^{\bm{\Lambda}t}$ by fixing $t$ to a certain time-step value. This way, we obtain a discretized linear evolving model where
$\bm{{\hat{x}}}^{k+1} = \bm{V}\bm{\Lambda}_d^k \bm{z}_0 .\  \label{eq:x_k}$
with the discrete-time index $k$, so that the state $\bm{x}$ can be predicted linearly. The pseudocode is provided in Algorithm~\ref{alg:LTI}.

\begin{algorithm}
\caption{\small KoopmanEigenFlow Mode Decomposition$\!-\!$KEFMD}\label{alg:LTI}
\begin{algorithmic}[1]
    \Statex {\small \textbf{Input:} Eigenvalue-eigenfunction pairs $(\bm{\lambda}, \bm{{\hat{\phi}}})$ from Alg. \ref{alg:KEFs}  }
    \State {\small Construct the \textit{lifted} LTI-system:}
    \Statex {\small $\bm{\Lambda} \gets \operatorname{diag}(\bm{\lambda})$}
    \Statex {\small $\bm{z} \gets \bm{{\hat{\phi}}} (\bm{x})$}
    \Statex {\small $\bm{V} \gets \bm{x}\bm{z}^\dagger$}
    \State {\small $\bm{{\dot{{\hat{x}}}}} \gets \bm{V \Lambda z}$}
    \State {\small $\bm{{\hat{x}}}^{k+1} \gets \bm{V} \bm{\Lambda}_d^k \bm{z}_0$}
    \Statex {\small \textbf{Output:} $\bm{\Lambda},\bm{\Lambda}_d,\bm{V},\bm{{\hat{\phi}}}$}
\end{algorithmic}%
\end{algorithm}

\vspace{-1em}
\section{Evaluation}\label{sec:simres}
To validate the proposed approach, we demonstrate the performance of the constructed Koopman operator dynamical models on two examples with different properties.
In both, ACF with 7 coupling layers are used, whose composition results in the diffeomorphism. The neural networks for the scaling and translation functions in each affine coupling layers have 3 hidden layers, 120 neurons each with an Exponential Linear Unit (ELU) as the activation function. Batch learning is performed with a batch size of 64 and the initialization of the weights is defined such that $\boldsymbol{{\hat{d}}}(\boldsymbol{x})$ is an identity map.

\begin{example}[A simple illustrative example]\label{ex1}
Consider the following dynamical system \cite{Folkestad2019}:
\begin{equation}\label{SMdt}
\bm{{\dot{x}}}=\left[\!\begin{array}{c}
\mu x_{1} \\
\lambda\left(x_{2}-x_{1}^{2}\right)
\end{array}\!\right]=
\left[\begin{array}{cc}
	\mu & 0 \\
	0 & \lambda
	\end{array}\right]
	\boldsymbol{x}+
	\left[\begin{array}{c}
	0 \\
	\lambda\left(-x_{1}^{2}\right)
	\end{array}\right],
	\end{equation}
with $\mu\tsgn{=}\tsgn{-}0.7$ and $\lambda\tsgn{=}\tsgn{-}0.3$. $N\tsgn{=}4800$ training data is generated from $24$ equally long trajectories with sampling time $dt\tsgn{=}0.065s$, with the starting points are uniformly distributed on the edges of $\set{X}\tsgn{=}[\tsgn{-}5,5]^{2}$. The maximum powers are set to $p^{(i)}\tsgn{=}5$ which results in $36$ lifted coordinates $\boldsymbol{z}$. For this system the exact diffeomorphism $\boldsymbol{d}(\bm{x})\tsgn{=}[x_1,x_2\tsgn{-}\frac{\lambda}{\lambda-2 \mu}x_1^2]^{\top}$ is known, allowing us to directly compare it to the learned one. Fig.~\ref{fig:motivating_example_diffeo} shows the error between the learned and the true diffeomorphism; demonstrating the mapping is well captured by our ACF design from Subs. \ref{subs:ACFdesign}.

\begin{figure}[htb]
\centering
\includegraphics[width=.99\linewidth]{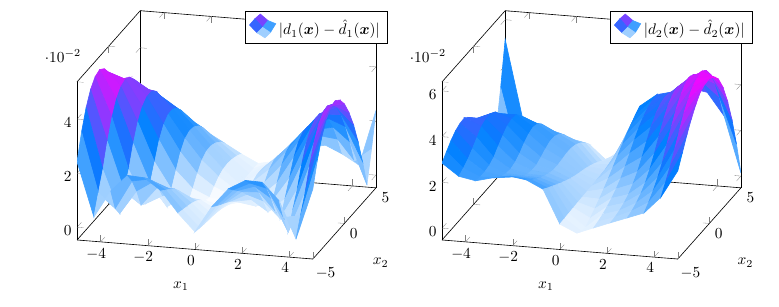}
\caption{\small Dimension-wise plot of the error between the true and learned diffeomorphism.}
\label{fig:motivating_example_diffeo}
\end{figure}

The main goal, however, is the linear prediction of the state $\bm{x}$ - given through $\boldsymbol{{\hat{x}}}^{k+1} \tsgn{=} \boldsymbol{V}\boldsymbol{\Lambda}_d^k \boldsymbol{z}_0$ in \textit{KEFMD}. To validate our algorithm we compare it with the related KEEDMD~\cite{Folkestad2019} and the established EDMD approaches with monomials and radial basis functions - in their continuous-time, Koopman generator versions \cite{Klus2020}.
In contrast to our deep learning approach, the EDMD approaches use a predefined, function basis as lifting functions. A basis that is not data-driven provides good predictive performance only when it spans a Koopman-invariant subspace. As Koopman-invariant coordinates are unsupervised features - presuming a suitable basis is available \textit{apriori} is a strong assumption. The KEEDMD~\cite{Folkestad2019} approach has similar system-theoretic considerations, but employs a very different learning procedure.

Tab.~\ref{table_comparison} shows an statistical evaluation (mean and standard deviation) of the root mean squared errors (RMSE's) of $100$ trajectories with start points on an uniformly spaced grid for each of the methods. The suitable size of the lifting dimension can be determined via cross-validation. Besides KEFMD, the EDMD-monomials approach gives an almost perfect result since the relevant eigenfunctions for the system of Ex. \ref{ex1} are spanned by monomials.
\end{example}




\begin{example}[Exact modes unknown \textit{apriori}]\label{ex3}
	Consider the following dynamical system:
	\begin{equation}
    \boldsymbol{{\dot{x}}}=\boldsymbol{f}(\boldsymbol{x}) = \begin{bmatrix}\left(a+c \cdot \operatorname{sin}^{2}(x_{2})\right)x_{1}\\b x_{2}\end{bmatrix},\
    \label{eq:ex3}
    \end{equation} 
with $a\tsgn{=}\tsgn{-}1.3$, $b\tsgn{=}\tsgn{-}2$, $c\tsgn{=}1.5$ and the Jacobian linearization $\dot{\bm{y}}\tsgn{=}[a y_{1} \quad b y_{2}]^{\top}$.
The data set with $N\tsgn{=}11200$ data points is generated from $56$ equally long trajectories with sampling time $dt\tsgn{=}0.015s$, starting on the edges of $\set{X}\tsgn{=}[\tsgn{-}5.5,5.5]^2$. The number of lifted coordinates is $196$. For this system we do not know the explicit diffeomorphism and do not \textit{apriori} know whether an exact finite representation exists. 
\end{example}
Fig.~\ref{fig:trajectories_keedmd} shows how our learned Koopman operator dynamical model, although linearly evolving, reproduces a clearly nonlinear time response in original coordinates. Therein are also the resulting state trajectories generated by the different approaches with identical training data. Only KEFMD captures the nonlinear system's behavior well - as quantified in Tab.~\ref{table_comparison} and illustrated in Fig. \ref{fig:trajectories_keedmd}. Note that the differing dimension of EDMD-monomials in Ex. \ref{ex3} is due to higher order monomials not performing as well. 
{\begin{remark}[Ill-posedness of KEEDMD]
The related approach of KEEDMD \cite{Folkestad2019} does not pose a fully supervised learning problem due to not having explicit training targets as our approach does in \eqref{eq:loss_function}. The aforementioned, coupled with the use of vanilla NNs - a non-diffeomorphic hypothesis class - results in a severely underdetermined formulation. With the addition of employing heuristics that are not theoretically justified, the KEEDMD learning framework is ill-posed - which the performance evaluation in Tab. \ref{table_comparison} demonstrates as well. 
Even though considering stable systems, it offers no stability guarantees - making the ill-posedness also apparent in Fig. \ref{fig:trajectories_keedmd} - as the trajectories show no convergence to the origin.
\end{remark}}
\vspace{-0.5em}
\setlength\tabcolsep{1pt} 
\begin{table}[ht]
    \centering
    \resizebox{1.0\linewidth}{!}{
    \begin{tabular}[t]{l||l|l|l|l}
    $ $ & {\normalsize \textbf{KEFMD}} & {\normalsize KEEDMD \tablefootnote{\hyperlink{https://github.com/Cafolkes/keedmd}{https://github.com/Cafolkes/keedmd}\label{gitkeedmd}}} & {\normalsize EDMD-mon.\tablefootnote{\hyperlink{https://github.com/sklus/d3s}{https://github.com/sklus/d3s}}} & {\normalsize EDMD-RBF $^{\text{\ref{gitkeedmd}}}$} \\
    \hline
    \hline
    {\normalsize \ref{ex1}} & ${\normalsize \textbf{0.002}\pm 0.001}^{(36)}$ & ${\normalsize 9.12 \pm 6.92}^{(36)}$ & ${\normalsize \textbf{0.001} \pm 0.001}^{(36)}$ & ${\normalsize 1.19 \pm 0.52}^{(36)}$ \\
    \hline
    {\normalsize \ref{ex3}} & ${\normalsize \textbf{0.014} \pm 0.000}^{(196)}$ & ${\normalsize 1.41 \pm 0.62}^{(196)}$ & ${\normalsize 0.71 \pm 0.8}^{(81)}$ & ${\normalsize 0.76 \pm 0.32}^{(196)}$ \\
    \end{tabular}
    }
 \caption{\small RMSE mean and standard deviation for different approaches. The lifted state dimension is written in brackets.}
    \label{table_comparison}
\end{table}
\begin{figure}[htb]
\centering
\includegraphics[width=.99\linewidth]{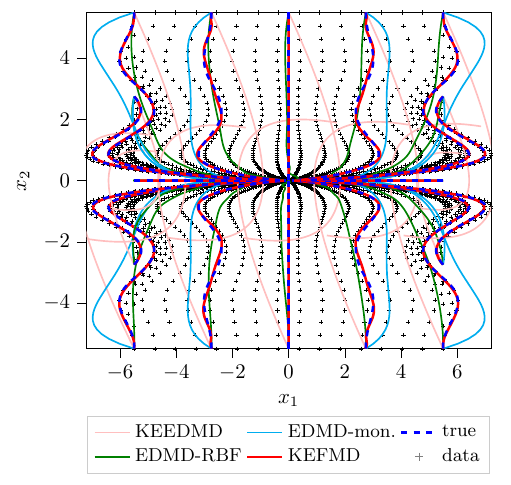}
\caption{\small Linear prediction performance of the different approaches.\protect\footnotemark\ }
\label{fig:trajectories_keedmd}
\end{figure}
\footnotetext{For clarity of presentation every 5th datapoint is plotted.}

\section{CONCLUSION}\label{sec:cnclsn}

In this paper, we present a novel reliable and safe framework for learning Koopman eigenfunctions for constructing linear prediction models for a class of nonlinear dynamics. 
These results demonstrate superior performance compared to related works and showcase the utility and transferrability of Koopman operator theory to data-driven realizations.
The reliable learning of our approach offers extensions to controlled systems for efficient optimal control using linear systems theory.

\begin{spacing}{0.9}
\bibliographystyle{IEEEtran}
\bibliography{bibfile}
\end{spacing}


\end{document}